\pdfoutput=1
\documentclass[11pt]{article}

\usepackage[utf8]{inputenc}
\usepackage[T1]{fontenc}
\usepackage[margin=1in]{geometry}
\usepackage{amsmath,amssymb,amsthm}
\usepackage{booktabs}
\usepackage{graphicx}
\usepackage{enumitem}
\usepackage[numbers,sort&compress]{natbib}
\usepackage[hidelinks]{hyperref}
\hypersetup{pdftitle={Narrow Decision Models: A System-1 Architecture for Calibrated, Typed Decisions},
  pdfauthor={Carlos Andr\'e Antunes, Aldry Albuquerque, Gabriel de Melo Schmith, Nat\~a Batista, Pedro Oziel Escobar da Silva,
    Stefano Benko Rossi, Lingrid Dietrich Lorenz da Silva, Fl\'avia Cristina da Costa, Thiago Bertotto, Luis Fernando Fausto}}

\newtheorem{proposition}{Proposition}
\newtheorem{definition}{Definition}
\theoremstyle{remark}
\newtheorem*{remark}{Remark}

\newcommand{\anc}{\operatorname{anc}}
\newcommand{\NDM}{\textsc{NDM}}

\title{Narrow Decision Models:\\ A System-1 Architecture for Calibrated, Typed Decisions}

\makeatletter
\renewcommand\@maketitle{%
  \newpage\null\vskip -1.5em
  \begin{center}%
    {\LARGE \@title \par}%
    \vskip 1.1em%
    {\lineskip .35em \begin{tabular}[t]{c}\@author\end{tabular}\par}%
    \vskip .7em%
    {\small \@date}%
  \end{center}%
  \par\vskip .6em}
\makeatother
\renewenvironment{abstract}{\small\begin{center}\bfseries\abstractname\end{center}\vspace{-.6em}\noindent\ignorespaces}{\par}
\newcommand{\autor}[2]{{\small #1}\\[-2pt] {\footnotesize Softplan}\\[-2pt] {\footnotesize\texttt{#2}}}
\author{\autor{Carlos Andr\'e Antunes}{carlos.antunes@softplan.com.br}
  \and \autor{Aldry Albuquerque}{aldry.albuquerque@softplan.com.br}
  \and \autor{Gabriel de Melo Schmith}{gabriel.schmith@softplan.com.br}
  \and \autor{Nat\~a Batista}{nata.batista@softplan.com.br}
  \and \autor{Pedro Oziel Escobar da Silva}{pedro.escobar@softplan.com.br}
  \and \autor{Stefano Benko Rossi}{stefano.rossi@softplan.com.br}
  \and \autor{Lingrid Dietrich Lorenz da Silva}{lingrid.silva@softplan.com.br}
  \and \autor{Fl\'avia Cristina da Costa}{flavia.cristina@softplan.com.br}
  \and \autor{Thiago Bertotto}{thiago.bertotto@softplan.com.br}
  \and \autor{Luis Fernando Fausto}{fausto@softplan.com.br}}

\date{October 2026}

\begin{document}

\maketitle

\begin{abstract}
System One models, introduced with TypeSafe's Jev, answer typed questions
about a state with calibrated probabilities instead of generated text: choosing among options described in natural
language (\emph{Choice}), placing the state on an ordered scale (\emph{Score}), or judging a yes/no statement
(\emph{Noul}). Jev is general-purpose: one set of weights, served as an API to every customer and adapted to a domain
only through the request. We study the other end of the design space. A \emph{Narrow Decision Model} (\NDM) offers the
same interface but is trained for the decisions of one organization, is small enough to run on that organization's own
infrastructure, and has an open architecture whose properties can be stated and tested. The state can be any text or
structured record about which decisions are made; our running case is the routing of court communications. We specify
such an architecture,
its training design and an evaluation protocol. The architecture serializes a request as a prefix tree (the state, then each question, then each option) and runs a
bidirectional encoder under a block attention mask with logical positions, so that the answer to a question is, by
construction, invariant to the order of its options and independent of which other questions are asked with it.
Decisions over hierarchies, such as routing a communication through an organization chart, are compositions of Choice
decisions with an explicit ``none of these'' option. We describe how such a model can be trained with proper scoring rules
through a curriculum that teaches the decision format, then the skill of choosing by reading option descriptions on
fictitious organizations, and only then the target domain; how it is calibrated and allowed to abstain; when a decision
should be composed from other decisions instead of learned; and how its mechanism can be accepted or rejected before any
held-out data is opened. A minimal validation on public and fictitious synthetic data only shows that the invariances
hold to rounding in a trained model, that a small instance decides on organization charts it has never seen well above
lexical retrieval, and that, at equal compute, the block architecture reaches a lower loss than a pairwise
cross-encoder. A first instance trained on real legal documents agrees with the LLM classifier in production on 91\% of
held-out documents; the evaluation on real data is left to a companion paper.
\end{abstract}
\clearpage  %

\section{Introduction}

Organizations make a large number of narrow decisions about the items they handle: documents, support tickets,
transactions, records, the steps of automated workflows. A public legal office that receives a court communication, the
case we build on, must decide which unit handles it, how urgent it is, whether an injunction was granted against the
institution, whether there is a fine or a freeze of public funds, and whether the communication concerns the institution
at all. Each of these is a decision among a small, known set of answers, made over a state (here, a long document) under
a taxonomy that is written down (an organization chart with unit descriptions, a rubric for urgency) and that changes over
time.

General-purpose models are increasingly used for this kind of work, and they come in two kinds. Large language models
can be prompted with the taxonomy and the state and asked for an answer. This works often enough to be tempting, but it has properties that are wrong for a
decision component. The answer is text that must be parsed. The preference among options depends on their order and on
their labels \citep{zheng2024mcq}, and on where the relevant evidence sits in a long context \citep{liu2024lost}. The
confidence, when it is reported at all, is not a calibrated probability. Every additional option or question costs
another pass, and the cost grows with the size of a deliberately general model. In Kahneman's terms
\citep{kahneman2011thinking}, a slow, deliberate System~2 is being used for a task that should be done by a fast, automatic
System~1, but one that, unlike human intuition, knows how confident it is and can say ``I do not know''.

System One models were introduced to close this gap \citep{typesafe2026jev}. Jev, the first of them, receives a
\emph{state} and a set of \emph{typed questions} of three kinds (Choice, Score and Noul), evaluates every question in
parallel and in isolation against a state ingested once, and returns typed answers with probabilities trained to be
calibrated, plus a confidence for Choice and Score; it does not generate text. TypeSafe names its post-training objective
reinforcement learning for calibrated decisions (RLCD) \citep{typesafe2026primer}, but has not published the architecture
or the method. Jev is the second kind of general-purpose model: the same weights serve every customer and every domain,
it is adapted to a domain only through the request (the state, and the instructions and criteria of each question), it is
consumed as a hosted service, and its documentation names English as the language in which it is most accurate. Other
decision models with the same interface followed within weeks, including open encoders and decoders
\citep{convai2026laya,fastino2026glinerdecide,liu2026decision}, and the first independent comparison already shows what documentation
cannot promise: attaching the words yes and no to the opposite rubrics changes about half of the hosted model's answers,
and a classifier fine-tuned on each task is the most accurate model on almost every labeled benchmark
\citep{rafe2026benchmarking}. We take the System One interface as given and ask a different question: what should a
\emph{narrow} model with this interface look like?

A \emph{Narrow Decision Model} (\NDM) is a System One model trained for the decisions of one organization. It receives a
state, which can be any text or structured record (a document, a ticket, a transaction, the context of an agent; here, a
court communication and its metadata), and a set of typed questions, and returns, for each question, a probability
distribution over its admissible answers, in one forward pass, without generating text. Narrowness buys what a
general-purpose service does not offer: the data never leave the organization's infrastructure; the model is trained
on the organization's own taxonomy, rubrics and language (in our case, Portuguese legal text); its versions, costs and
thresholds are under the organization's control; and its architecture is open, so the properties it promises can be
proven and tested instead of taken from documentation. It pays with a training pipeline per domain and with coverage:
outside the decisions it was trained for, a general-purpose model is the better tool. Its options are still described in
natural language and read, not memorized, so that a new organization chart can be served without retraining, and every
choice can include an explicit abstention.

The contributions are architectural and methodological:
\begin{enumerate}[leftmargin=*,itemsep=2pt]
  \item A formal statement of the System One decision contract \citep{typesafe2026jev} for a narrow, trained model,
        with explicit abstention in two regimes, declared and undeclared, and set-valued training targets
        (Section~\ref{sec:contract}).
  \item \NDM-Block, an encoder architecture over a prefix-tree serialization with a block attention mask and logical
        positions. Shared positions and a mask between the options of a single multiple-choice question are known
        \citep{yang2022unimc,egressy2025setllm,convai2026laya}; we extend them to a tree of typed questions over a state
        encoded once, which makes option-permutation invariance, question isolation and exact slicing structural
        properties rather than hopes, and lets the state be cached exactly across sequences (Section~\ref{sec:arch}).
  \item A treatment of \emph{hierarchical decisions} as compositions of Choice decisions in which abstention has a
        meaning (stop at a unit, or triage at the root), extending the greedy and beam searches documented for Jev
        \citep{typesafe2026hier} (Section~\ref{sec:tree}); and of \emph{composed decisions}: where System One practice
        combines atomic answers in code, we push full distributions through a frozen rubric (Section~\ref{sec:composed}).
  \item A \emph{training design}: proper scoring rules applied directly when labels are known, which reach the same
        optimum as the policy gradient of an RLCD-style reward without its variance \citep{maio2026evercld}; a curriculum
        that separates the decision format, the reading skill and the domain; and a synthetic data engine with explicit
        validity guards (Section~\ref{sec:training}).
  \item An \emph{evaluation protocol} that accepts or rejects the mechanism before held-out data is opened
        (Section~\ref{sec:protocol}).
  \item A \emph{minimal validation}, on public and fictitious data only, of the mechanism and of the feasibility of the
        design (Section~\ref{sec:validation}).
\end{enumerate}

Like architecture papers that describe a design before its full evaluation \citep{hinton2021glom,
bengio2017consciousness, schmidhuber2015learning}, we commit here to the design and its reasons. Unlike them, every
property claimed below is implemented and tested in code, and Section~\ref{sec:validation} reports a minimal validation on
public and fictitious data only: mechanism tests on a trained model, the reading of descriptions on public datasets and on
organization charts never seen, calibration, two ablations of the heads and a comparison with the pairwise alternative at
equal compute. Section~\ref{sec:real} summarizes a first training on real legal documents, which was promising; the
evaluation on real data in production is the subject of a companion paper.

\section{Desiderata}
\label{sec:desiderata}

We list the properties a decision component should have. The rest of the paper shows how the architecture provides each
one, and which ones are guaranteed by construction rather than learned.

\begin{description}[style=unboxed,leftmargin=1.2em,itemsep=2pt]
  \item[D1 Typed output.] The output is a distribution over a closed set of answers, never free text.
  \item[D2 Calibration.] The probabilities mean what they say: among answers given probability $0.8$, about $80\%$ are
        right. Calibration is measured, adjusted and reported per question type.
  \item[D3 One pass.] All questions and options of a request are answered in a single forward pass, with the state
        encoded once.
  \item[D4 Option permutation invariance.] Reordering the options of a question permutes its output distribution and
        changes nothing else.
  \item[D5 Question isolation.] The answer to a question does not depend on which other questions are asked in the same
        request, nor on their options.
  \item[D6 Abstention.] Every Choice can include an explicit ``none of these'' option, whose probability is calibrated
        separately from the others.
  \item[D7 Reading, not memorizing.] Options are identified by their descriptions, so that unseen options and unseen
        organization charts can be served without retraining.
  \item[D8 Auditability.] The mechanism can be tested in isolation, with tests that a broken implementation would fail,
        before any held-out data is used.
\end{description}

D1--D3 and D5 are promised at the interface by System One models \citep{typesafe2026jev}. D4 and D5 are the properties
most often violated by prompted language models \citep{zheng2024mcq,liu2024lost}. We want them, and D6--D8, as structural
guarantees of an architecture open to inspection, so that no amount of training data can break them.

\section{The decision contract}
\label{sec:contract}

The contract below is the System One interface \citep{typesafe2026jev}, names of the three question types included,
restated formally, with two additions that a narrow, trained model needs: an explicit abstention option and set-valued
training targets.

\begin{definition}[Request]
A request is a pair $(s, \mathcal{Q})$ where $s$ is the \emph{state}, any text or structured record about which the decisions are made (in our case, the text of a
communication and its structured metadata), and
$\mathcal{Q} = \{q_1, \dots, q_n\}$ is a set of typed questions. Each question has an instruction $\iota_i$ in natural
language and a type:
\begin{itemize}[leftmargin=*,itemsep=1pt]
  \item \textbf{Choice}: a set of options $\{(o_{ij}, d_{ij})\}_{j=1}^{K_i}$ with names $o_{ij}$ and natural-language
        descriptions $d_{ij}$, optionally including a designated abstention option $o_{i\varnothing}$;
  \item \textbf{Score}: an ordered list of levels $\ell_{i1} \prec \dots \prec \ell_{iL}$, each with a description;
  \item \textbf{Noul}: a yes/no judgment, optionally with descriptions of what \emph{yes} and \emph{no} mean.
\end{itemize}
\end{definition}

\begin{definition}[Answer]
The model returns, for each Choice question, a distribution $p_i \in \Delta^{K_i - 1}$ over its options; for each Score
question, a distribution over its levels; for each Noul question, a probability $p_i(\text{yes}) \in [0,1]$. Training
targets may be a single accepted option, a set $A_i$ of accepted options (any of which is correct), or a target
distribution.
\end{definition}

Abstention appears in two regimes. In the \emph{declared} regime the request marks one option as ``none of these''; in the
\emph{undeclared} regime the same option is present as an ordinary option and the model must recognize it from its
description. Both regimes appear in training, so that the model does not learn to rely on the declaration: in the
synthetic and public data the regime is drawn per question from a hash of the question key, independently of the label,
so that about half of the questions declare it. The abstention option has a fixed description in each data source, so
robustness to paraphrases of ``none of these'' is not trained; Section~\ref{sec:limitations} lists it as a limitation.

A typical request concerning a court communication contains one Choice per node of an organization chart along a path
(``which of these units is competent?''), a Choice for the legal matter, a Choice for the type of act required (contest,
appeal, comply, pay, take notice, \dots), a Score for urgency, and two Noul questions (``was an injunction granted against
the institution?'', ``is there a fine or a freeze of public funds?''). All are answered together.

Nothing in the contract, or in the architecture of Section~\ref{sec:arch}, depends on the state being a document.
Structured records are linearized one field per line (its path, then its value) and read like any other text, so the same
interface serves a support ticket, a transaction, a patient record or the context of an agent's next step. Court
communications are the case we build on and the source of our examples; the validation of
Section~\ref{sec:validation} already reads short public intent sentences as states.

\section{Architecture}
\label{sec:arch}

Figure~\ref{fig:overview} summarizes the architecture; the subsections below take its parts in turn.

\begin{figure}[t]
  \centering
  \includegraphics[width=\linewidth]{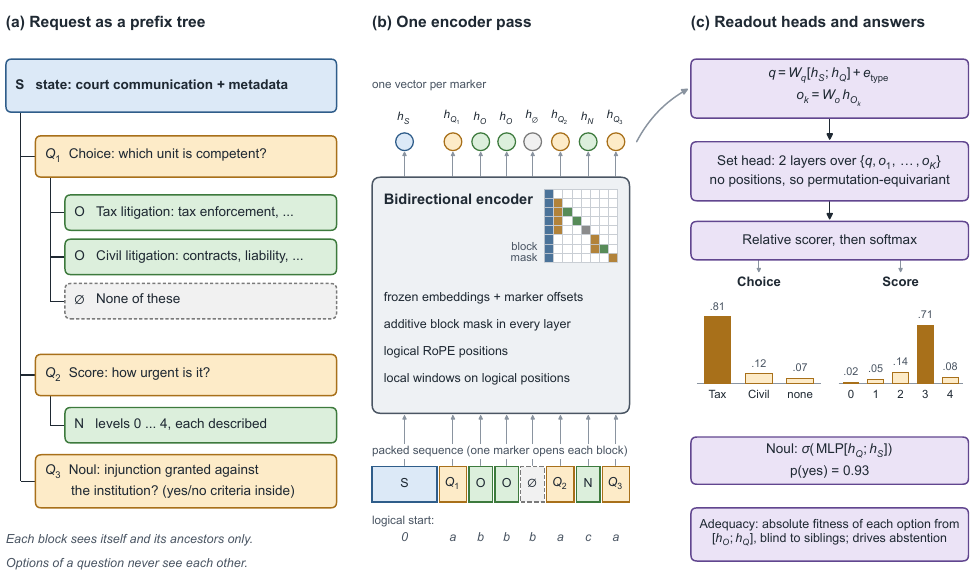}
  \caption{Overview of \NDM-Block. (a)~A request is a prefix tree: the state, the typed questions under it, and the options or levels under each question; the yes/no criteria of a Noul question sit inside its instruction block. (b)~The tree is packed into one sequence and encoded once under a block mask. Positions are logical: every question starts at the same position~$a$, right after the state, and sibling options share position~$b$. (c)~One vector per marker feeds the readout heads, which return one distribution per question. All probabilities in the figures are illustrative.}
  \label{fig:overview}
\end{figure}

\subsection{Backbone}

The backbone is a bidirectional Transformer encoder \citep{vaswani2017attention,devlin2019bert}. We use a modern long-context, multilingual encoder of the
ModernBERT family \citep{warner2024modernbert,marone2025mmbert}, with rotary position embeddings
\citep{su2021roformer} and alternating global and local attention layers. An encoder rather than a decoder is chosen for
three reasons: the contract never generates text; within a block, reading in both directions is what the task requires;
and an encoder of this size costs several times less per token to train than a decoder of comparable quality. The token
embedding table is frozen; the transformer body and a small set of new parameters are trained.

\subsection{Prefix-tree serialization}

A request is serialized as a tree of \emph{blocks}. The root block holds the state $S$. Each question contributes an
instruction block $Q_i$ whose parent is $S$, and each option or level contributes a block $O_{ij}$
whose parent is $Q_i$. Each block starts with a \emph{marker} token that identifies its role (state, Choice instruction,
Score instruction, Noul instruction, option, level, description); a Noul instruction also carries its \emph{yes} and
\emph{no} criteria inside its block, each introduced by a marker. Markers use reserved, unused vocabulary entries. Since
the embedding table is frozen, each marker receives a trainable offset, initialized so that the marker starts as the mean
embedding of a short equivalent phrase (``Option:'', ``Level:'').

The blocks of one request are laid out contiguously in one sequence when they fit; a question with too many options is
\emph{sliced}: its options are distributed over several sequences, each carrying a copy of $S$ and $Q_i$. Several requests
are packed into a batch by length buckets. Slicing keeps each option whole, fills each sequence up to the largest
length bucket and repeats the state and the instruction in every slice; a request is never split across batches.
Appendix~\ref{app:montagem} gives the procedure in pseudo-code, with its cost.

\subsection{Block attention mask with logical positions}

\begin{figure}[t]
  \centering
  \includegraphics[width=\linewidth]{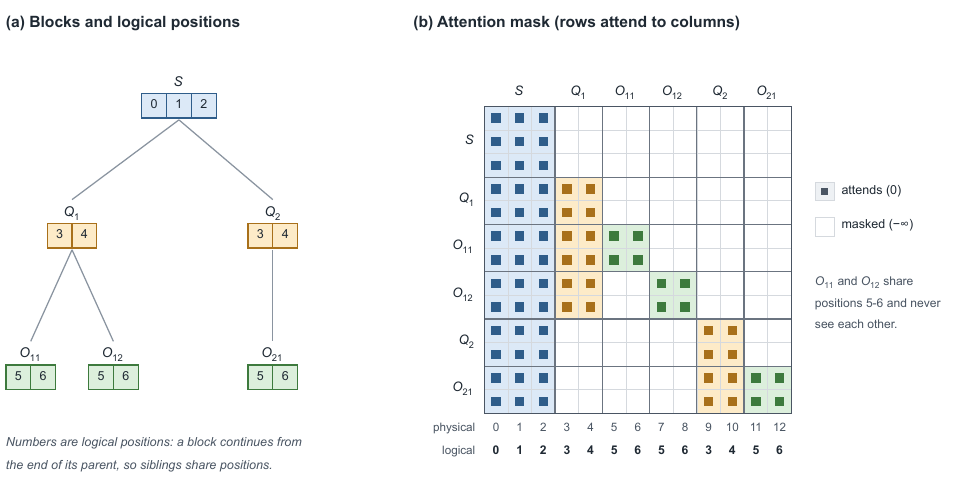}
  \caption{Block attention mask for a request with two questions, two options under $Q_1$ and one under $Q_2$. (a)~The logical positions of a block continue from the end of its parent. (b)~Token-level mask: each row attends to its own block and to its ancestors. The siblings $O_{11}$ and $O_{12}$ occupy the same logical positions and never see each other, which is what makes Propositions~1--3 hold by construction.}
  \label{fig:mask}
\end{figure}

Let $\anc(b)$ be the set of ancestors of block $b$ in the tree. Every token of block $b$ may attend to the tokens of $b$
and of $\anc(b)$, and to nothing else (Figure~\ref{fig:mask}):
\begin{equation}
  M_{tu} =
  \begin{cases}
    0 & \text{if } \mathrm{block}(u) \in \{\mathrm{block}(t)\} \cup \anc(\mathrm{block}(t)),\\
    -\infty & \text{otherwise}.
  \end{cases}
\end{equation}
The mask is additive and applied in every layer. Positions are \emph{logical}: a block's positions continue from the end of
its parent, not from the end of whatever precedes it physically. Two sibling options therefore occupy the same logical
positions, and each sees exactly the context it would see if it were the only option of the only question. Local
attention windows are measured on logical positions, so the same holds for the layers with sliding-window attention.

Under this mask, the state is encoded once and attended by every question; each instruction is encoded once and attended
by its options; and no option, instruction or question sees another one. Tree attention masks of this kind are used to
verify candidate continuations in parallel in speculative decoding \citep{miao2023specinfer,cai2024medusa}, and shared
prefixes are exploited for throughput in decoder serving \citep{juravsky2024hydragen}. Closer to our use, a
multiple-choice encoder has given each option its own position ids and blocked attention between options so that they
do not influence each other \citep{yang2022unimc}; Set-LLM \citep{egressy2025setllm} adapts a decoder with a set mask and
set positions that guarantee invariance to the order of a set; Parallel Context Windows \citep{ratner2023pcw} and PINE
\citep{wang2025pine} reuse positions or restrict attention across segments of a decoder's context; and the open decision
model Laya offers a layout in which the options of a question share positions and cannot see each other
\citep{convai2026laya}. All of these act on one list of candidates or segments, next to a context that may attend to
them. Here the tree is the \emph{semantics} of a whole request, not an inference trick: options within a question and
questions within a request are isolated alike, the state attends to nothing but itself, and the same mask is used in
training as well as in inference.

\subsection{Readout heads}

From the final hidden states we read one vector per marker: $h_S$ for the state, $h_{Q_i}$ for each instruction and
$h_{O_{ij}}$ for each option. A question vector is a projection of $[h_S; h_{Q_i}]$ plus an embedding of the question
type; an option vector is a projection of $h_{O_{ij}}$.

\paragraph{Set head.} The option vectors of one question, together with its question vector, pass through a small
transformer with no positional information, whose attention is restricted to the options of that question. This is the
only place where options of the same question interact: the model can compare candidates, which a pairwise scorer cannot,
while the head remains permutation-equivariant \citep{zaheer2017deepsets}. Set-Encoder \citep{schlatt2025setencoder}
obtains permutation-invariant interactions between candidate passages inside a re-ranking cross-encoder; the set head
obtains them after the encoder, over one vector per option, so that the encoder pass keeps its isolation. Its output
projections are initialized at zero, so that at the start of training it is the identity.

\paragraph{Choice and Score.} A relative scorer maps each option (or level) vector, together with the question vector,
to a logit, through a body shared by both types and one final reading per type; Choice applies a softmax over the options
of the question, Score a softmax over its levels.

\paragraph{Adequacy.} A separate reading gives each option an absolute logit of fitness, computed from the encoder
vectors of the option and the instruction before the set head, and therefore independent of the siblings. It supports
abstention that does not depend on which other options happen to be offered (Section~\ref{sec:calibration}).

\paragraph{Noul.} A Noul question is read from its instruction and state vectors, $[h_{Q_i}; h_S]$, into a single logit
passed through a sigmoid; its yes/no criteria are read through $h_{Q_i}$, since they sit inside the instruction block.

The new parameters (marker offsets, projections, set head and readings) amount to 6.4 million, against a trained body of
110 million and a frozen embedding table of 197 million; the set head has two pre-norm layers of width 384 with six
heads. Appendix~\ref{app:hiper} lists every size and weight.

\subsection{Invariances by construction}

\begin{proposition}[Option permutation]
Let $\pi$ permute the options of question $q_i$. Then the output distribution of $q_i$ is permuted by $\pi$, and the
outputs of all other questions are unchanged.
\end{proposition}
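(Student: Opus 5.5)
We prove this by following a request through the model in two stages, the encoder under the block mask and then the readout heads, and checking equivariance at each. We model a permutation $\pi$ of the options of $q_i$ as a relabeling of the sibling blocks $O_{i1},\dots,O_{iK_i}$. The physical order of these blocks in the packed sequence may change, but the tree, the parent of each block, and the logical positions of each block stay the same, since siblings all start at the same logical position $b$ right after $Q_i$. The claim then splits in two. First, the final hidden state of every token depends only on the contents and logical positions of its own block and of its ancestors. Second, the heads combine the marker vectors in a way that is equivariant in the options of $q_i$ and blind to everything else.

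\textbf{Step 1 (encoder).} We argue by induction over layers. At layer $0$, a token's input is its (frozen) embedding plus the marker offset, which depends only on the token and its role; rotary positions enter only through logical positions. Suppose that at layer $\ell$ the hidden state of a token $t$ is a function $F_\ell$ of the contents and logical positions of $\{\mathrm{block}(t)\}\cup\anc(\mathrm{block}(t))$. At layer $\ell+1$, the mask restricts the attention of $t$ to exactly that set of tokens. Rotary attention scores depend only on differences of logical positions. Local windows are measured in logical positions, and softmax over a set of keys is invariant to the order of the keys. The feed-forward, norm and residual parts are tokenwise. So the induction carries through.

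Under $\pi$, no block's ancestor set changes, and the ancestors of $Q_j$ and $O_{jk}$ for $j\neq i$ do not contain any $O_{ik}$. Hence $h_S$, every $h_{Q_j}$, and every $h_{O_{jk}}$ with $j\neq i$ are unchanged, and $h_{O_{i,\pi(k)}}$ after the permutation equals $h_{O_{ik}}$ before it. The main obstacle is exactly here: the induction must use logical positions, not physical ones. Any implementation detail that leaks physical order breaks it. We therefore state explicitly the assumption that rotary phases and window boundaries are computed from logical position ids. Under that assumption the argument goes through. Slicing is handled the same way, since each slice is itself a tree containing $S$, $Q_i$ and a subset of the options, and a slice's encoding does not depend on which slice an option lands in.

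\textbf{Step 2 (heads).} The question vector is built from $[h_S;h_{Q_i}]$ and the type embedding, so it is unchanged. The set head is a transformer with no positional information, attending only among the options of $q_i$ and the question vector. It is therefore permutation-equivariant: self-attention without positions commutes with permuting its tokens, and the remaining sublayers act tokenwise. The relative scorer and the adequacy reading act per option, given the (unchanged) question vector. So the logits of $q_i$ are permuted by $\pi$, and softmax commutes with permutation. This gives the first half of the claim.

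For any $q_j$ with $j\neq i$, every input to its heads comes from Step 1 and is unchanged. Its set head never sees the options of $q_i$, and Noul reads only $[h_{Q_j};h_S]$. So its outputs are identical, which gives the second half.
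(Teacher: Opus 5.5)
Your proof is correct and follows the paper's route. The paper's sketch makes the same three points: each option block's hidden states depend only on its own tokens and its ancestors at shared logical positions, the set head is permutation-equivariant, and no other question reads the options of $q_i$; your layer-by-layer induction spells out what the sketch states in one line. As the paper's remark after the proposition notes, the mask is not needed for this statement, because shared logical positions alone suffice (attention sums over keys regardless of their order), so your Step~1 uses slightly more than necessary.
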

\begin{proof}[Proof sketch]
Sibling options share the same parent and the same logical positions, and the mask forbids attention between them; the
hidden states of each option block are therefore a function of the option's own tokens and of its ancestors only, and do
not depend on its physical index. The set head has no positional information and is equivariant. Other questions cannot
attend to the options of $q_i$.
\end{proof}

\begin{remark}
Proposition~1 needs only the logical positions. Even without the mask, exchanging two sibling blocks that carry the same
logical positions only reorders the keys, which attention sums over without regard to order. Sequential positions break
it. Within one question, this is the invariance that shared option positions already give in the multiple-choice and
set-aware models cited above. The mask is what Propositions~2 and~3 need.
\end{remark}

\paragraph{Relaxing isolation, auditably.} Isolation is a requirement, not an accident: an answer that depends on which
other questions were asked cannot be audited question by question. When a decision should depend on another (a
structural node on the thematic unit chosen above it, a consistency constraint between two answers), the dependency is
made explicit instead of being left to attention: the earlier answer is written into the state of the later request, or
the answers are combined by a rubric (Section~\ref{sec:composed}). Both keep the dependency visible and testable, which
an attention path between questions would not.

\begin{proposition}[Question isolation]
The output for $q_i$ in a request $(s, \mathcal{Q})$ equals its output in the request $(s, \{q_i\})$.
\end{proposition}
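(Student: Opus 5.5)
The plan is an induction over layers on the hidden states of the blocks that $q_i$'s readout uses. We compare two serializations: the full request $(s,\mathcal{Q})$ and the singleton request $(s,\{q_i\})$. The claim to carry through every layer is this: for every token $t$ in a block $b \in \{S, Q_i\} \cup \{O_{ij}\}_j$, the hidden state at $t$ is the same in both serializations. This holds because the tokens of $b$ and of $\anc(b)$ are identical in both, and so are their logical positions.

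Logical positions are fixed top-down. $S$ starts at $0$, every instruction block starts at position $a = |S|$, and options start where their instruction ends. So the logical positions of $S$, $Q_i$ and the $O_{ij}$ do not depend on which other questions are present or where they sit physically. This is the step where logical rather than sequential positions matter: with sequential positions, $Q_i$'s rotary phases would shift with the lengths of the preceding questions.

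\emph{Base case.} Layer $0$ is the frozen token embedding plus the trainable marker offsets, which depend only on the token itself. The embeddings therefore agree.

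\emph{Inductive step.} Suppose the states agree at layer $\ell$ on all tokens of $S$, $Q_i$ and the $O_{ij}$. Take a token $t$ in such a block $b$. By the mask, $t$ attends exactly to the tokens of $\{b\}\cup\anc(b)$, a set contained in $\{S,Q_i,b\}$ and identical in both requests. Masked keys get weight $e^{-\infty}=0$ and so contribute nothing to the softmax normalizer or to the sum. Attention is a sum over keys and does not depend on their physical order. The rotary factors depend only on logical position differences, which agree. For sliding-window layers, the window is defined on logical positions, so the set of keys admitted is also the same. Layer norm and the MLP act tokenwise. Hence the states agree at layer $\ell+1$.

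The main obstacle I expect is implementation-level rather than mathematical: checking that nothing in the backbone breaks the per-token locality. Candidates are padding and packing across requests in a batch, global-attention layers, and any normalization over the sequence. I would handle this by stating that the mask is block-diagonal across requests in a packed batch, and that every non-attention operation is tokenwise. I would also treat slicing explicitly. Each slice repeats $S$ and $Q_i$ at the same logical positions, so the induction applies slice by slice, and the per-option vectors are then reassembled independently of the other questions.

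Finally, the readout. The marker vectors $h_S$, $h_{Q_i}$ and $h_{O_{ij}}$ agree by the induction. The question vector is computed from $[h_S;h_{Q_i}]$ and the type embedding. The set head attends only within the options of $q_i$, and the relative scorer, the adequacy head and the Noul head take only these vectors as input. Choice and Score normalize over the options or levels of $q_i$ alone. So $p_i$ is the same function of the same inputs in both requests, which proves the proposition. As in Proposition~1, the argument needs both the mask and the logical positions; the mask supplies the key restriction in the inductive step.
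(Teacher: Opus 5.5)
Your proof is correct and follows the paper's sketch. The blocks of $q_i$ attend only to themselves and their ancestors $Q_i$ and $S$, $S$ attends only to itself, and no block of another question is an ancestor, so nothing $q_i$ reads changes when the other questions are removed. Your layer-by-layer induction makes that sketch rigorous, as do the check that the readout heads consume only $q_i$'s marker vectors and the explicit point that logical positions keep $Q_i$ at the same offset from $S$ in both requests. The sketch leaves that last point implicit, but the paper's negative control with sequential positions for question order confirms it is needed.
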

\begin{proof}[Proof sketch]
The blocks of $q_i$ attend only to themselves and to $S$; $S$ attends only to itself. No block of another question is an
ancestor of a block of $q_i$, so removing it changes no hidden state that $q_i$ reads.
\end{proof}

\begin{proposition}[Slicing and paths]
Splitting the options of a question over several sequences, or evaluating each root-to-leaf path of the tree as a separate
sequence, yields the same outputs as evaluating the whole tree in one sequence.
\end{proposition}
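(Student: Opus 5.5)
The proof rests on one locality fact: under the block mask with logical positions, the final hidden state of every token is a function only of the tokens and logical positions of its own block and of its ancestors. I would prove this first, by induction on the layers. At layer $0$ the hidden state of a token is its embedding plus marker offset, with rotary phases fixed by its logical position; none of this depends on physical placement or on other blocks. At layer $\ell+1$ a token $t$ of block $b$ attends, by the mask, only to keys and values of tokens $u$ with $\mathrm{block}(u)\in\{b\}\cup\anc(b)$. By the induction hypothesis, the layer-$\ell$ states of those tokens depend only on $b$ and $\anc(b)$, since $\anc$ of an ancestor lies inside $\anc(b)$. The rotary term depends only on the difference of logical positions. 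The sliding-window restriction is measured on logical positions, so the set of attended tokens is also determined locally. Softmax attention is a sum over keys, so the physical order of those keys is irrelevant. The feed-forward and normalisation sublayers act token-wise, which closes the induction.

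With this invariant in hand, I would compare the two layouts. In the whole-tree sequence and in any slice, or in any root-to-leaf path sequence, each block that appears carries the same tokens. It also carries the same logical positions, because these are defined recursively from the parent's end and the parent chain is reproduced in every copy. Finally, its full ancestor chain is present: a slice carries copies of $S$ and $Q_i$, and a path carries $S$, $Q_i$ and one option. Hence $h_S$, $h_{Q_i}$ and each $h_{O_{ij}}$ read at the markers are identical across layouts. This is also what Proposition~2 gives for the removal of other questions, applied here to the removal of sibling options. The copies of $S$ and $Q_i$ in different slices are then bitwise equal functions of the same inputs, so which copy feeds the question vector is immaterial.

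The last step is the readout. The encoder-level quantities, namely the question vector, the option vectors and the adequacy logits, are equal by the above. The set head and the softmax over options must, however, be computed over all $K_i$ options of the question, gathered from all slices, not per slice. I would state this as part of the procedure: option vectors are collected per question across sequences before the set head runs, as Appendix~\ref{app:montagem} describes. Equality then follows because the set head's input multiset is identical, and by its equivariance (Proposition~1) the order in which slices are concatenated does not matter.

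I expect the main obstacle to be precisely this readout step, together with the exactness caveats around it. The encoder part is a clean induction. The claim fails, however, if the set head or the normalisation were applied within a slice. It also depends on details such as the treatment of padding and of bucket-dependent numerical kernels. I would therefore state equality in exact arithmetic, conditional on gathering before the set head. I would note that floating-point reductions over differently sized sequences can differ at the level of rounding, which matches the ``to rounding'' wording of the validation.
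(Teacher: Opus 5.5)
Your proposal is correct and follows the paper's own argument: the hidden states of a block depend only on its own tokens and those of its ancestors at their logical positions, and every slice or path carries all ancestors of each block it holds at the same logical positions. You also spell out what the paper's sketch leaves implicit: the layer-by-layer induction, and the requirement that the set head and softmax run over options gathered across slices, which the paper provides through the cross-sequence pointers of Appendix~\ref{app:montagem}; the padding case you flag is settled there, since padding belongs to no block and is never attended.
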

\begin{proof}[Proof sketch]
By the argument of Proposition~1, the hidden states of a block depend only on its own tokens and on those of its
ancestors, at their logical positions. A slice or a path contains every ancestor of each block it holds, at the same
logical positions; padding tokens belong to no block and are never attended.
\end{proof}

These statements hold exactly in exact arithmetic. In floating point, the order of reductions changes with the shape of
the batch, so they hold up to rounding; Section~\ref{sec:protocol} states how the tolerance is set and tested.

\subsection{Encoding the state once across sequences}

Within a sequence the state is encoded once. Across sequences (slices, successive steps of a tree descent, a beam), it
would be re-encoded. Because the state attends only to itself, its keys and values in every layer (with rotary positions
already applied) can be cached and reused: a question and its options are then encoded as a short query sequence whose
keys and values are the cached state followed by the question's own tokens, under a rectangular version of the same mask.
An additional question then costs only its own tokens in the projections and feed-forward layers; its attention still
reads the cached keys, so that part of the cost grows with the length of the state, but nothing of the state is
recomputed. Reusing the keys and values of separately encoded segments is the idea of Prompt Cache
\citep{gim2024promptcache} and Block-Attention \citep{ma2025blockattention}, where it approximates the full pass or
requires fine-tuning the model to tolerate it; here the reused segment is the state, which attends to nothing else in
training as in inference, so the reuse is exact. The result must reproduce the full forward pass, a property that is
tested, not assumed.

\subsection{Cost compared with the alternatives}

Table~\ref{tab:cost} compares \NDM-Block with two alternatives for the same contract: a pairwise cross-encoder
(\NDM-Pair), which encodes the state, a question and one option at a time, as in passage re-ranking
\citep{nogueira2019passage} or entailment-based zero-shot classification \citep{yin2019benchmarking}; and a decoder that
scores each option by its log-likelihood.

\begin{table}[t]
  \centering
  \small
  \begin{tabular}{@{}lccc@{}}
    \toprule
                                     & \NDM-Block           & \NDM-Pair             & Decoder scoring      \\
    \midrule
    Forward passes per request       & $1$                  & $\sum_i K_i$          & $\sum_i K_i$\,$^{a}$ \\
    Times the state is encoded       & $1$                  & $\sum_i K_i$          & $1$\,$^{a}$          \\
    Options compared with each other & yes (set head)       & no                    & no                   \\
    Option-order invariance          & by construction      & by construction       & no                   \\
    Question isolation               & by construction      & by construction       & prompt-dependent     \\
    Trained for calibration          & yes                  & yes                   & no                   \\
    \bottomrule
  \end{tabular}
  \caption{The same decision contract served by three architectures. $K_i$ is the number of options of question $i$.
  $^{a}$With prefix caching, one pass over the shared prefix and one short pass per option.}
  \label{tab:cost}
\end{table}

\NDM-Pair is the natural baseline: it has the same invariances and no mask, at a cost that grows with the number of
options. It is the reference against which the block mask must be shown not to lose accuracy (Section~\ref{sec:protocol}).

\section{Calibration, abstention and selective decisions}
\label{sec:calibration}

\paragraph{Training with proper scoring rules.} A probability can only be calibrated if the loss rewards honest
probabilities. Every primitive is trained with a sum of two strictly proper scoring rules \citep{gneiting2007strictly}, the log-loss
and a quadratic score: for Choice, the log-loss of the accepted set, $-\log \sum_{k \in A} p_k$, plus the Brier score
\citep{brier1950verification} over the options with the accepted ones merged into one; for Score, the log-loss plus the
ranked probability score \citep{epstein1969scoring}, which penalizes mass placed far from the true level more than mass
placed next to it; for Noul, the binary log-loss plus the Brier score. The log-loss gives strong gradients to confident
mistakes; the quadratic term is bounded and keeps a few mislabeled examples from dominating. The adequacy reading
receives a small auxiliary binary loss.

The log-loss is the primary term: it is strictly proper and local, and it is the more efficient criterion for telling good
forecasters from bad ones \citep{wheatcroft2019rps}. The RPS is added for Score only, as a non-local term that rewards
residual mass near the true level; a cumulative link model \citep{mccullagh1980ordinal} would be the alternative that
constrains the shape of the ordinal distribution instead. Whether the RPS term helps is an ablation of the protocol, not an
assumption of the design.

\paragraph{Post-hoc temperature.} After training, one temperature per primitive (and a bias for Noul) is fitted on a
held-out calibration set by minimizing the negative log-likelihood \citep{platt1999probabilistic,guo2017calibration}. Two
safeguards matter in practice. First, when the calibration set is separable (every prediction correct), the optimal
temperature tends to zero and would make the model arbitrarily overconfident; the temperature is therefore bounded, and
the report states when it was not identified by the data. Second, the temperature fitted on synthetic data is the primary
one; a domain temperature fitted on the small labeled development set is shrunk towards it in log space, with a weight
that falls as the bootstrap uncertainty of the development estimate grows (and, when that uncertainty exceeds the prior
tolerance, it may only make the model less confident). It is reported next to the synthetic one and never replaces it.

\paragraph{Abstention prior.} The rate at which ``none of these'' is correct differs between training data and
deployment. After temperature scaling, the log-odds of the abstention option are shifted from the reference rate
$\rho_{\mathrm{ref}}$ observed in calibration to the production rate $\rho_{\mathrm{prod}}$, when the latter is known.

\paragraph{Selective decisions.} A calibrated model can trade coverage for risk \citep{geifman2017selective}. Two
thresholds are chosen per artifact on a synthetic selection set, each as the largest coverage whose risk stays below a
target: $\tau_p$ on the top calibrated probability at a tree node (below it, the descent stops at the current node), and
$\tau$ on the highest adequacy among the non-abstention options (below it, the model abstains). Decisions that fall below
the thresholds go to a person, as in the confidence-gated routing recommended for System One models
\citep{typesafe2026jev}. A decision component that knows when not to decide is more useful than one that is slightly more
accurate on average.

\paragraph{One decision rule per decision.} The two thresholds read different scores and govern different decisions, so
they are never combined. At a node of the tree, the descent follows the argmax of the calibrated distribution, which
includes the explicit abstention option, and stops if that argmax is the abstention or if its probability is below
$\tau_p$. For a question outside the tree, the model abstains if the highest adequacy among the non-abstention options is
below $\tau$, and otherwise answers with the argmax of the calibrated distribution restricted to those options. Which of
the three readings of abstention (the explicit option, the adequacy, the fallback to the parent) is the most reliable is
measured, not assumed.

\paragraph{Guarantees.} The thresholds above are empirical cuts on a selection set. Methods with finite-sample
guarantees fit in the same place: selective classification with guaranteed risk \citep{geifman2017selective}, and
learn-then-test and conformal risk control \citep{angelopoulos2021ltt,angelopoulos2022crc,angelopoulos2021gentle},
which replace the empirical cut by one with a high-probability bound on the risk. Their guarantees assume that
calibration and deployment data are exchangeable, which synthetic calibration data cannot provide; they belong where real
calibration data exist, in deployment.

\section{Hierarchical decisions over organization trees}
\label{sec:tree}

\begin{figure}[t]
  \centering
  \includegraphics[width=\linewidth]{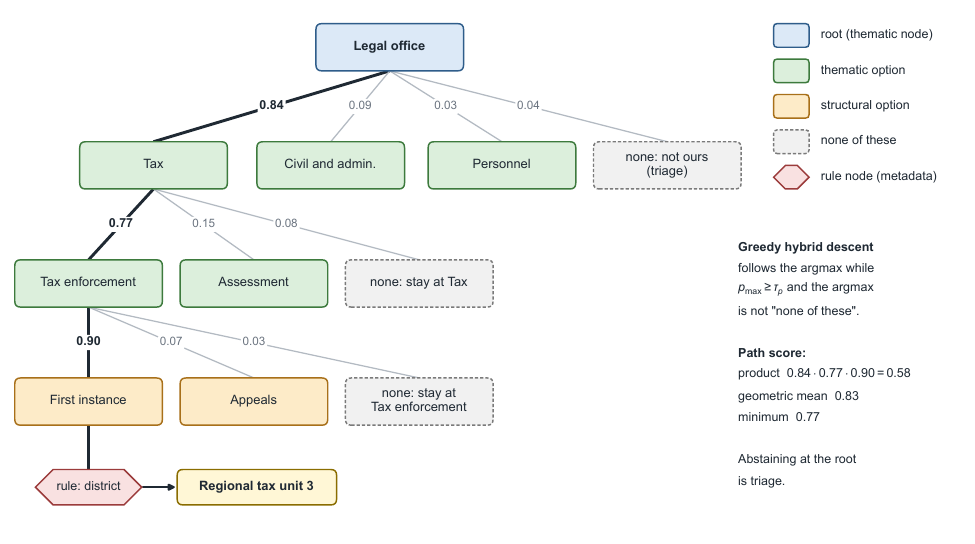}
  \caption{Routing as a descent over a fictitious organization chart. Every model node is a Choice over its children plus ``none of these''; structural options encode procedural attributes, and rule nodes apply a deterministic rule over metadata. The bold path is the greedy hybrid descent.}
  \label{fig:tree}
\end{figure}

Routing an item (here, a document) through an organization is a sequence of choices (Figure~\ref{fig:tree}). An organization chart is a tree whose internal nodes
are decisions over their children. We distinguish three kinds of nodes: \emph{thematic} nodes, whose children are
competences described in natural language; \emph{structural} nodes, whose children are defined by procedural attributes
(phase of the case, urgency regime, court instance) and recur across the tree with the same options; and \emph{rule} nodes,
whose choice is a deterministic rule over metadata (for example, the court district). Every model node is a Choice over its
children plus ``none of these''.

Abstention gives the tree its semantics. Stopping at a node means ``this belongs here but to none of the more specific
units''; abstaining at the root means ``this does not concern the institution'', which is \emph{triage}. Triage examples
must be faithful to that meaning: a communication that names the institution as a party to a lawsuit is, by definition,
the institution's business, whatever its subject. A synthetic triage example that changes only the subject while keeping
the institution as a party teaches the model a contradiction.

Greedy and beam descents over Choice probabilities, with beams ranked by the geometric mean of edge probabilities, are
documented for Jev \citep{typesafe2026hier}. What a narrow model adds is the meaning of abstention at every node, the
distinction between thematic, structural and rule nodes, and a descent fixed in advance for evaluation. Three search
procedures are defined over the tree, all using the same model:
\begin{itemize}[leftmargin=*,itemsep=1pt]
  \item \emph{greedy hybrid descent}, which follows the most probable child and stops when abstention wins or the top
        probability falls below a threshold; this is the procedure fixed in advance for evaluation;
  \item \emph{beam search} of small width;
  \item \emph{exhaustive sweep}, which scores every root-to-leaf path.
\end{itemize}
A path is scored by an aggregate of the probabilities along it: the product (the joint probability under conditional
independence), the geometric mean (insensitive to depth) or the minimum (the weakest link). The aggregate also serves as
the confidence of the final unit, for selective routing.

\section{Composed decisions}
\label{sec:composed}

\begin{figure}[t]
  \centering
  \includegraphics[width=\linewidth]{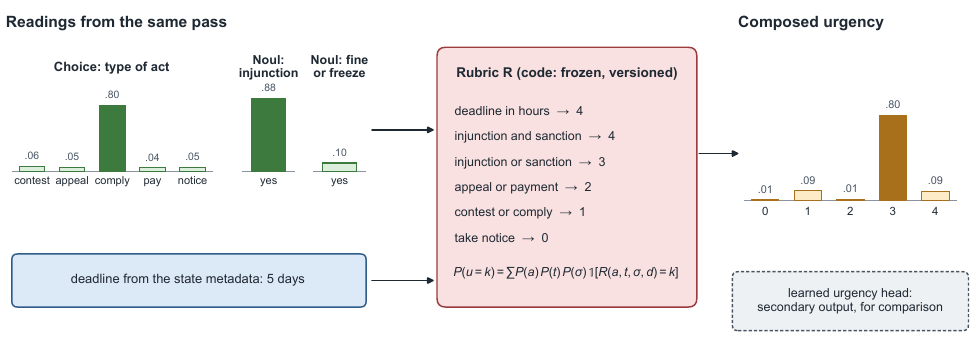}
  \caption{A composed decision. The urgency distribution is the push-forward, through a frozen rubric, of the model's own readings (type of act, injunction, sanction) and of the deadline in the state. The rubric shown is illustrative; the composed distribution is computed exactly from the readings displayed.}
  \label{fig:composed}
\end{figure}

Some answers are, by definition, a function of other answers. In our setting the urgency of a communication is a written
rubric over the type of act, the presence of an injunction against the institution, the presence and kind of a sanction,
and the deadline: an injunction gives at least level~3, an appeal or a payment without injunction or sanction gives
level~2, a deadline in hours gives level~4, and so on. A model can be trained to output urgency directly, as a sixth
judgment. It may then read every ingredient correctly and still combine them wrongly.

When a rubric $R$ defines the answer, we \emph{compose} instead (Figure~\ref{fig:composed}): the distribution of the composed answer is the push-forward
of the model's own readings through the rubric,
\begin{equation}
  P(u = k) = \sum_{a, t, \sigma} P(a)\, P(t)\, P(\sigma)\; \mathbf{1}\!\left[R(a, t, \sigma, d) = k\right],
\end{equation}
where $u$ is the urgency, $a$ ranges over the types of act, $t$ and $\sigma$ over the presence of an injunction and
of a sanction, and $d$ is the deadline, taken from the state. The readings come from questions the model answers anyway, so composition costs
nothing at inference. It makes the answer auditable (the rubric is code, versioned and frozen) and consistent with the
other answers by construction. The learned version is kept as a secondary output, to measure how much the model would
have learned on its own. Composition is not a substitute for reading: it moves the burden to the ingredients, which must
each be calibrated.

Decomposing a broad judgment into atomic questions and combining the answers in code is the recommended way of building
with System One models \citep{typesafe2026jev}. Composition as defined here is the probabilistic version of that
practice: the rubric acts on distributions, not on point answers, so the composed answer carries the uncertainty of its
ingredients and can itself be calibrated and thresholded. Urgency can also be framed as pairwise ranking of messages
\citep{gatto2026triage}; ranking orders a queue, while a rubric gives each item an absolute level, which is what
deadlines and routing rules consume.

\section{Training design}
\label{sec:training}

\subsection{Why not reinforcement learning}

Reinforcement learning with verifiable rewards is now a common way to adapt language models to tasks with checkable
answers \citep{deepseek2025r1}, and Jev is post-trained with an objective TypeSafe calls reinforcement learning for
calibrated decisions \citep{typesafe2026primer}, whose details are not public. An open reimplementation
\citep{maio2026evercld} samples an option $a \sim p$ and uses the reward $r_a = c_a - p_a$, where $c_a \in \{0, 1\}$ says
whether $a$ was right, in the line of proper-scoring rewards for verbalized confidence \citep{baniharouni2025rewarding};
its author notes that REINFORCE with this reward estimates half the gradient of the Brier score without bias. The
identity is short. Let $p$ be the predicted distribution over options and $c$ the one-hot target; the expected REINFORCE
gradient \citep{williams1992simple} is
\begin{equation}
  \mathbb{E}_{a \sim p}\!\left[r_a \nabla \log p_a\right]
  = \sum_a p_a (c_a - p_a) \nabla \log p_a
  = \sum_a (c_a - p_a) \nabla p_a
  = -\tfrac{1}{2}\, \nabla \sum_a (p_a - c_a)^2 .
\end{equation}
The expected policy gradient is exactly half the negative gradient of the Brier score. Reinforcement learning is the
right tool when only the outcome of the action taken is observed: a communication routed to one unit reveals whether that
unit was right, not which one would have been \citep{maio2026evercld}. The open encoder Laya uses a related scheme: it
adds Gaussian noise to the logits and rewards the perturbed distribution with a strictly proper score, through REINFORCE
\citep{convai2026laya}; in expectation, this follows the gradient of the score smoothed by the noise. When the label is
known, as in our synthetic data, the direct scoring rule reaches the same optimum without sampling variance, and it is
what we use. With a reward of $1$ for the correct answer and $0$ otherwise, as
in verifiable-reward training, the objective is accuracy rather than a proper scoring rule, and the resulting model is
expected to be overconfident.

\subsection{Curriculum}

\begin{figure}[t]
  \centering
  \includegraphics[width=\linewidth]{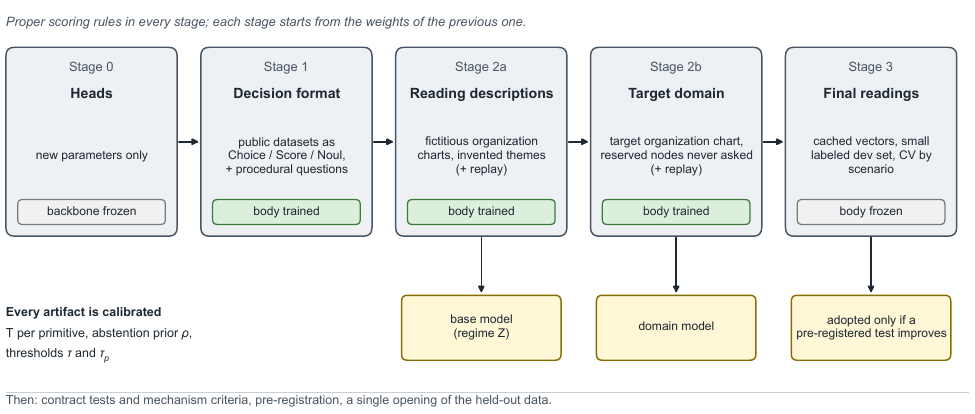}
  \caption{Training curriculum. Each stage starts from the weights of the previous one. Stage~2a never sees the target organization, so its artifact measures the skill of reading descriptions on options never seen (regime~Z).}
  \label{fig:curriculum}
\end{figure}

Training proceeds in stages, each starting from the weights of the previous one (Figure~\ref{fig:curriculum}).
\begin{description}[style=unboxed,leftmargin=1.2em,itemsep=2pt]
  \item[Stage 0: heads.] The backbone is frozen and only the new parameters are trained, so that the heads start from a
        reasonable point before the body moves.
  \item[Stage 1: decision format.] Public classification, intent and inference datasets are converted to the contract,
        with labels and their descriptions as options, and complemented by procedural generators of Choice, Score and Noul
        questions. This stage teaches the format, not the domain.
  \item[Stage 2a: reading descriptions.] Synthetic communications are routed through \emph{fictitious} organization charts,
        each drawn from a pool of invented themes with their own descriptions and lexicon. Because no option of the
        target organization appears, this stage can only be solved by reading descriptions; its output is a model that
        can be evaluated on the target organization without having seen it (zero-shot, ``regime Z'').
  \item[Stage 2b: domain.] Synthetic communications are routed through the target organization chart, with a reserved set
        of nodes never asked in training, so that the model's behavior on unseen options of the target domain can still be
        measured. A small fraction of public data is replayed in stages 2a and 2b.
  \item[Stage 3: final readings.] With the body frozen, the final linear readings are re-fitted on cached vectors using
        the small labeled development set, with cross-validation by scenario; the result is adopted only if a
        pre-registered test shows an improvement on a separate development set.
\end{description}
Each stage has a fixed budget of updates, periodic validation on a fixed sample, early stopping and restoration of the
best weights, and every artifact it produces is calibrated before it is used.

\subsection{The synthetic data engine}

\begin{figure}[t]
  \centering
  \includegraphics[width=\linewidth]{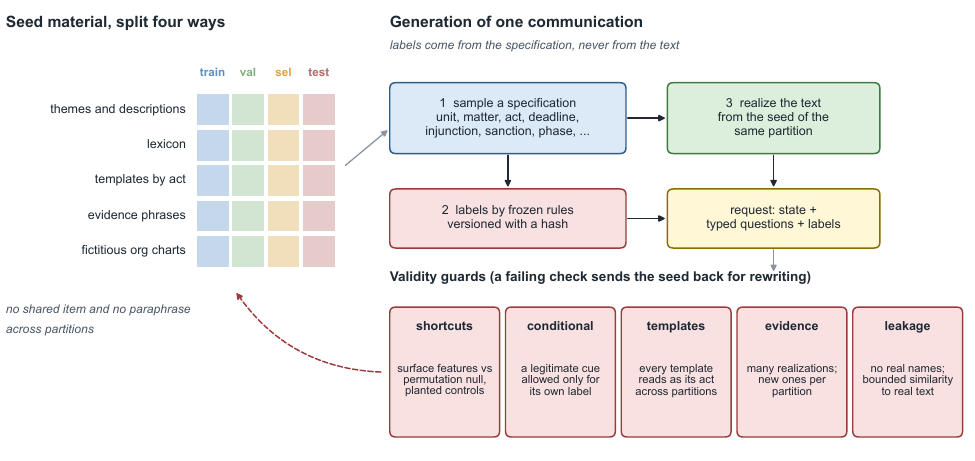}
  \caption{Synthetic data engine. The seed material is split four ways. Each communication starts from a sampled specification; its labels follow from frozen rules and its text is realized from the seed of the same partition. Validity guards send failing seed material back for rewriting.}
  \label{fig:engine}
\end{figure}

Labeled communications are scarce and partly protected, as is common in Portuguese legal text classification
\citep{noguti2024smallclaims}, so most training data is synthetic. The design rule is that
labels are produced by code, from a structured specification, and the text is realized from it, never the other way
around (Figure~\ref{fig:engine}).

\paragraph{Specification first.} For each communication the engine samples a specification: destination unit (a leaf of
the tree), legal matter, type of act, deadline, injunction, sanction, phase, regime, court instance, and metadata. Labels
for every question follow from the specification by frozen rules versioned with a hash; the urgency rubric of
Section~\ref{sec:composed} is one of them. In the full design, stage 2a draws about 2,700 communications over fictitious
charts of 12 to 30 nodes and stage 2b about 2,600 over the target chart; 30\% of the option names are replaced by opaque
codes, so that only the descriptions identify the options (Appendix~\ref{app:hiper}).

\paragraph{Seed material partitioned four ways.} Themes, lexicon items, text templates (openings, bodies by act,
closings), evidence phrases and fictitious organizations are written as seed material and split into training, validation,
selection and test partitions with no shared item and no paraphrase across partitions. A model that memorizes training
phrasings gains nothing on the other partitions.

\paragraph{Validity guards.} Synthetic data fails in characteristic ways, and each failure has a guard:
\begin{itemize}[leftmargin=*,itemsep=1pt]
  \item \emph{shortcuts}: a classifier trained on surface features (metadata, presence of a phrase, length) must not
        predict a label above a permutation null; planted controls verify that the test has power;
  \item \emph{conditional legitimacy}: a feature that legitimately carries a label (for example, the institution not
        being a party, for triage) is allowed for that label only, and a conditional test checks that, among examples
        sharing the feature, metadata alone cannot separate the label while the text can;
  \item \emph{unambiguous templates}: a classifier trained on the training partition must recognize the type of act of
        every template of the other partitions; a template that reads as a different act is rewritten;
  \item \emph{evidence inventories}: phrases that realize label-bearing evidence (an injunction granted, a fine) have
        many realizations in training and new ones in each evaluation partition, with no word that marks the opposite
        value in any partition;
  \item \emph{leakage}: no name, acronym or key of a real unit appears in the seed, and similarity to the real
        descriptions is bounded.
\end{itemize}

\section{Evaluation protocol}
\label{sec:protocol}

The protocol separates the acceptance of the \emph{mechanism} from the measurement of \emph{utility}, and both from the
single opening of held-out data.

\paragraph{Mechanism.} Before any held-out data is used, each artifact must pass:
\begin{itemize}[leftmargin=*,itemsep=1pt]
  \item \emph{contract tests}: differential tests of Propositions~1--3 (permuting options, removing questions, slicing,
        evaluating paths separately, padding, determinism) in double precision, with a tolerance set by whether the batch
        shape changes, and each paired with a negative control (an implementation without the mask, with sequential
        positions, or with visible siblings) that must fail;
  \item \emph{learning}: the validation loss of each primitive below both the uniform distribution and the end of stage 0;
  \item \emph{abstention}: non-zero recall of ``none of these'' at the first node of the tree on triage examples, with the
        false-alarm rate reported.
\end{itemize}

\paragraph{Utility.} Utility is measured as paired differences against trivial baselines (lexical retrieval over the
descriptions, the first option, a fallback to the parent unit), with confidence intervals from a bootstrap over scenarios,
not over individual decisions, because decisions from the same scenario are correlated. The primary hierarchical metric is
the accuracy of the final unit under the pre-registered greedy descent, macro-averaged over nodes. Two regimes are
reported: options already seen in training with new text (``regime E''), and options never seen (``regime Z''), measured
with the model trained only on fictitious organizations.

\paragraph{Locks.} The held-out partitions are opened once, after the mechanism has been accepted, the metrics and
decision rules have been pre-registered and the synthetic material has been reviewed by a domain expert. Queries to the
blind development set follow a closed list with a budget per experiment. Any artifact trained after the opening is
labeled as such and does not enter any acceptance criterion.

\paragraph{Architecture comparison.} \NDM-Block is the main line by design choice, not as a result. It is confronted with
\NDM-Pair under an equal budget of padded tokens, three seeds per arm and the same sampled options, and is kept if it is
not worse by more than a pre-registered margin, with the interval computed by a bootstrap over seeds and scenarios.

\section{Minimal validation}
\label{sec:validation}

This section checks the mechanism and the feasibility of the design on a small instance. It is not the evaluation of
Section~\ref{sec:protocol}: except for Section~\ref{sec:real}, it uses only public data and fictitious synthetic data,
and it never touches the target organization or any real communication. The reference instance, \texttt{ndm-base}, is mmBERT-base trained through stages
0, 1 and 2a of Section~\ref{sec:training} (public classification datasets, procedural questions and fictitious organization
charts) and calibrated on a synthetic validation partition. It has never seen the target organization. All measurements
ran on one GPU (NVIDIA RTX PRO 6000) in 49 minutes, with the procedure fixed in code before the run. The seed
material behind the synthetic data was drafted with the help of a language model and has not yet been reviewed by a
domain expert. Results on the target domain and on real data are the subject of the companion paper.

\subsection{Mechanism}

\paragraph{Contract tests.} Table~\ref{tab:contrato} reports the differential tests of Propositions~1--3 on
\texttt{ndm-base}, over synthetic requests of the selection partition, in float32 (all requests) and in float64 (a subset
of two requests), with the two attention implementations of the backbone. Every test stays below its tolerance: the
largest difference is $1.5 \times 10^{-5}$ in float32 and $3.2 \times 10^{-7}$ in float64, and several are exactly zero.
Every negative control fails by a wide margin: removing the mask, using sequential positions, letting siblings see each
other or giving a path full attention changes the outputs or the marker vectors by 2 to 14.

\begin{table}[t]
  \centering
  \small
  \begin{tabular}{@{}lccccl@{}}
    \toprule
                                        & \multicolumn{2}{c}{float32} & \multicolumn{2}{c}{float64} & \\
    \cmidrule(lr){2-3}\cmidrule(lr){4-5}
    test                                & sdpa & eager & sdpa & eager & negative control \\
    \midrule
    P1: permute the options             & 2.6e-6 & 4.7e-6 & 2.3e-9  & 1.6e-7 & sequential positions: 3.7 \\
    P2: question alone vs.\ batched     & 3.8e-6 & 3.5e-6 & 6.2e-9  & 3.2e-7 & no mask: 14.3 \\
    P2: state with and without questions & 0     & 0      & 0       & 0      & no mask: 2.0 \\
    P2: change a sibling question       & 0      & 0      & 0       & 0      & no mask: 3.9 \\
    P2: order of the questions          & 2.4e-6 & 2.7e-6 & 1.9e-8  & 1.3e-7 & sequential positions: 6.2 \\
    P3: slice the options               & 1.3e-5 & 1.3e-5 & 7.4e-14 & 1.2e-7 & visible siblings: 6.4 \\
    P3: tree as separate paths          & 1.5e-5 & 1.1e-5 & 9.3e-10 & 3.2e-7 & full-attention path: 8.0 \\
    padding                             & 2.7e-6 & 4.4e-6 & 4.4e-16 & 4.0e-7 & --- \\
    determinism                         & 0      & 0      & 0       & 0      & --- \\
    \bottomrule
  \end{tabular}
  \caption{Largest difference between two forms of the same request on \texttt{ndm-base}, and the smallest difference of
  the negative control. Tolerances: $10^{-4}$ in float32; in float64, $10^{-12}$ when the batch shape is kept and
  $10^{-6}$ (sdpa) or $10^{-5}$ (eager) when it changes. A control must differ by more than $10^{-3}$.}
  \label{tab:contrato}
\end{table}

\paragraph{State cache.} The cached path of Section~\ref{sec:arch} reproduces the full forward pass: the largest
difference in output probabilities is $7.3 \times 10^{-6}$ in float32 (168 questions over 20 states) and
$2.1 \times 10^{-8}$ in float64, while the two negative controls (local layers given the global mask; an option encoded
without the cached instruction) differ by more than 5. Encoding a state costs 6.5\,ms at 256 tokens and 59\,ms at 4,096;
an extra question batched against the cached state costs 0.8 to 4.9\,ms over the same range, about a tenth of the state.

\paragraph{Option order in practice.} On the 905 Choice questions of the selection partition, shuffling the options
leaves \texttt{ndm-base} unchanged: the argmax agrees in every question and the largest change of a probability is
$1.9 \times 10^{-6}$. An untrained decoder that scores each option by its log-likelihood (Qwen3-0.6B-Base) has about the
same accuracy in both orders (0.287 and 0.281), but changes its answer in 41\% of the questions, and single
probabilities move by up to 0.48: its errors are reshuffled, not removed.

\subsection{Reading descriptions}

\paragraph{Public data, by number of options.} Table~\ref{tab:k} shows accuracy on the official test splits of two
public intent datasets in Portuguese, with $K$ options drawn from the full label set (the correct one always included)
and read in one pass. Accuracy decreases slowly with $K$; with all 60 or 77 labels at once it is 0.74 and 0.70.

\begin{table}[t]
  \centering
  \small
  \begin{tabular}{@{}lrrrrrr@{}}
    \toprule
    dataset (labels) & $K = 4$ & $8$ & $16$ & $32$ & $60$ & $77$ \\
    \midrule
    MASSIVE intents & 0.956 & 0.924 & 0.878 & 0.811 & 0.740 & --- \\
    Banking77       & 0.953 & 0.913 & 0.860 & 0.801 & ---   & 0.701 \\
    \bottomrule
  \end{tabular}
  \caption{Accuracy of \texttt{ndm-base} on public test splits ($n = 2{,}974$ and $3{,}080$ sentences) by number of
  options read in one pass. 95\% bootstrap intervals are within $\pm 0.02$.}
  \label{tab:k}
\end{table}

\paragraph{Organization charts never seen.} On four fictitious organization charts of the selection partition, never
used in training (165 communications, triage excluded), the pre-registered greedy descent reaches the correct final unit
in 0.376 of the cases [0.297, 0.448], against 0.152 for lexical retrieval over the descriptions (BM25) and 0.067 for the
first option; the paired differences are $+0.224$ [0.133, 0.309] and $+0.309$ [0.224, 0.394], with intervals from a
bootstrap over communications. Beam search and the exhaustive sweep do not improve on the greedy descent (0.352). The
absolute level is modest, as expected of a model that has learned to read descriptions only on synthetic charts; the
point is that it decides on charts it has never seen, without retraining, well above the trivial baselines.

\subsection{Calibration}

The temperatures were fitted on a validation partition and are evaluated on the selection partition. For Choice, the
expected calibration error falls from 0.188 [0.160, 0.223] to 0.033 [0.033, 0.078], with average confidence moving from
0.865 to 0.698 for an accuracy of 0.678; for Score, from 0.088 to 0.064. For Noul it rises from 0.012 to 0.030: the
synthetic Noul questions are nearly separable, and the temperature fitted on one partition does not improve another.
On the public datasets, the synthetic temperature raises the negative log-likelihood (for example, from 0.123 to 0.232
with four MASSIVE intents): a temperature fitted on one domain does not transfer to another, which is why the domain
temperature of Section~\ref{sec:calibration} is fitted separately and reported next to the synthetic one.

\subsection{Heads and abstention}

\paragraph{Comparing candidates.} The same final reading, refitted out of fold on cached vectors, reaches a negative
log-likelihood of 1.181 (accuracy 0.655) when each option is scored independently, 1.266 (0.655) when the score also
receives summary statistics of the siblings (mean, maximum, $\log K$), and 0.799 (0.705) on the output of the set head.
Comparing candidates helps, and summary statistics are not a substitute for attention among them.

\paragraph{Reading abstention.} Three readings of ``none of these'' behave differently on the 627 Choice questions that
offer it (146 positives), with a threshold of 0.5. The probability of the explicit option is precise but conservative
(recall 0.25, false alarms 0.025, AUROC 0.79); the absolute adequacy is liberal (recall 0.66, false alarms 0.29, AUROC
0.76); the fallback to the parent is the weakest (AUROC 0.68). The two regimes, declared and undeclared, give similar
numbers, so the model does not depend on the declaration. Which reading to use is a trade-off between recall and false
alarms, set by the thresholds of Section~\ref{sec:calibration}.

\subsection{Block against pair}

The last experiment compares the two architectures of Table~\ref{tab:cost} under the rule of
Section~\ref{sec:protocol}, trained on public data and fictitious charts only: the same budget of 8 million padded tokens
per run, three seeds per arm and the same sampled options (the correct one and seven distractors). NDM-Block reaches a
lower negative log-likelihood than NDM-Pair on the selection partition, 0.819 against 1.024: the difference is
$-0.204$ [$-0.312$, $-0.107$] by a bootstrap over seeds and communications, so the upper bound is far below the margin of
0.02, and it is negative in every seed and for every question type (Choice $-0.19$, Score $-0.34$, Noul $-0.19$). The
comparison holds compute fixed, not data: with the same padded tokens, the block arm sees about 27,600 questions per run
and the pair arm about 4,300, because the pair re-encodes the state for every option. The result is the one that matters
for deployment, quality per unit of compute, rather than quality per example. In wall-clock time the block arms took
213\,s against 164\,s, because they carry more real tokens (5.7 against 4.9 million). The pre-registered comparison on
the target domain remains part of the companion study.

\subsection{From synthetic to real documents}
\label{sec:real}

The synthetic results were promising enough to train a first instance on real data. We summarize it only as a sign that
the design carries over; its evaluation belongs to the companion paper. The task is the type of a legal document: 28
kinds of procedural filings (initial petitions, answers, judgments, appeals, criminal pleadings and others), in about
840 thousand documents from court case files processed by our platform. The labels are those of the LLM-based classifier
in production, so the measure is agreement with that classifier, not accuracy against human annotation. The documents it
left as unknown (87\%), mostly certificates, notices and other filings outside the taxonomy, were kept out of training.
After deduplication and a split by case, so that no case falls in two partitions, the instance was continued from the
weights of the synthetic stage for one pass over 32 thousand class-balanced examples (50 minutes on one NVIDIA A100), as a
Choice question whose options are the 28 types with the descriptions of the production prompt, and calibrated on the
validation partition. On a held-out test set of 9,092 documents with the natural class distribution, it agrees with the
production classifier on 91.2\% of them (macro-F1 0.80, NLL 0.29, ECE 0.011), against 90.2\% (macro-F1 0.61, ECE 0.053)
for a TF-IDF and logistic-regression baseline trained on the same split; the difference in agreement is $+1.0$ point, with
a 95\% interval of $[+0.2, +1.9]$ by bootstrap over cases. The gain is in the rare types: on the eleven types with 1 to 50
test documents, the mean F1 is 0.75 against 0.37, while on the three largest both reach 0.92. Without ever seeing an
unknown document, the adequacy head places 58\% of a sample of 60 thousand of them below its threshold, that is, outside
the 28 types. These are the results of one run against labels produced by another model; they motivate the companion
study rather than replace it.

\section{Related work}

\paragraph{System One models.} Jev \citep{typesafe2026jev} introduced System One models and the interface adopted here:
a state, typed Choice, Score and Noul questions evaluated in parallel and in isolation, calibrated probabilities and a
confidence, no generated text. It is general-purpose and hosted, its architecture and training method are not public,
and its documentation describes patterns that we formalize for a narrow model: many questions per request, confidence-gated
routing, composite scoring, and hierarchical classification by greedy and beam search \citep{typesafe2026hier}. eve-rlcd
\citep{maio2026evercld} is an open, small decision model with the same interface: a decoder that prefills the state once
and scores declared options, trained from bandit feedback with an RLCD-style reward. Other decision models followed
within weeks, hosted and open, most of them adapted decoders that score option markers or letters
\citep{rafe2026benchmarking}. LLM-as-Jev \citep{li2026llmasjev} reads Choice, Score and Noul decisions from the next-token
probabilities of a causal LLM over bracketed option identifiers, whose tokens form a trie, and fine-tunes with a listwise
loss factorized over that trie; it treats position bias as a trade-off of joint scoring and mitigates it by shuffling the
options during training. Its tree is over the tokens of option identifiers; ours is over the request, and it is what makes
order invariance and question isolation hold by construction rather than on average. Open encoders are the closest to \NDM-Block. Laya \citep{convai2026laya} encodes each
question as its own sequence (instruction, options behind marker tokens, then the state, truncated to a fixed budget)
with ModernBERT or mmBERT, scores each option at its marker through a two-layer head, is trained with an RLCD-style
reward built from strictly proper scores, and is calibrated per question type and number of options; its released
checkpoints read the options sequentially and show a position effect, and an optional layout gives the options of a
question shared positions and forbids attention between them, which is our Proposition~1 for one question.
GLiNER2.5-Decide \citep{fastino2026glinerdecide} extends the schema-driven encoder GLiNER2 \citep{zaratiana2025gliner2} to
decisions and enforces constraints across them; its documentation does not describe invariance, calibration or
abstention. The open Decision family \citep{liu2026decision} has encoders derived from mmBERT that use Laya's layout with
one backbone path per question type, and causal decoders that read each option at its last token against a final query
token, calibrated by a post-hoc temperature; its report states that neither architecture enforces candidate-order
invariance, that a closed candidate set cannot express an abstention the application has not supplied, and that state
computation is not reused across questions. Decision-2.0-Kai \citep{vllmsr2026decision2kai}, the 0.6B model of the next
release, is a causal decoder that can share the state prefix across the questions of a request at inference, which in a
causal model is prefix caching, and still reads the options of a question in sequence, each seeing those before it.
\NDM-Block differs from all of these in serializing every question and every option of a request into one tree over a
state encoded once and cached, so that order invariance, question isolation and exact slicing hold by construction,
proven and tested on a trained model against negative controls; in its explicit abstention, read through a set head and
a sibling-blind adequacy score; in direct proper scoring, where Laya and eve-rlcd follow a policy gradient; and in being
narrow, with a training pipeline for one organization's decisions.
Independent evaluations support the narrow route: a DeBERTa-v3-large classifier fine-tuned on each task is the most
accurate model on almost every labeled benchmark of the first comparison of System One models, where attaching yes and
no to the opposite rubrics changes 50.5 in 100 answers of the hosted model, against at most 6.5 for fine-tuned decision
checkpoints \citep{rafe2026benchmarking}.

\paragraph{Pairwise and dual encoders.} Cross-encoders score a query and a candidate jointly \citep{nogueira2019passage};
dual encoders embed them separately and compare \citep{reimers2019sbert}. Zero-shot classification by entailment treats
each label description as a hypothesis \citep{yin2019benchmarking}. All three score candidates one at a time and produce
scores, not distributions over a set; \NDM-Block reads all candidates of all questions in one pass, lets candidates of the
same question be compared, and is trained for calibrated distributions.

\paragraph{Language models as selectors.} Prompted language models show selection bias towards option positions and
labels \citep{zheng2024mcq} and sensitivity to where evidence sits in long contexts \citep{liu2024lost}. These are
properties of the interface as much as of the model; the shared logical positions of \NDM-Block remove the first by
construction.

\paragraph{Masks, shared positions and segment caches.} Tree-structured attention masks verify several candidate
continuations in one pass in speculative decoding \citep{miao2023specinfer,cai2024medusa}, and shared-prefix attention
improves serving throughput \citep{juravsky2024hydragen}. Prompt Cache \citep{gim2024promptcache} and Block-Attention
\citep{ma2025blockattention} encode segments independently to reuse their keys and values. Parallel Context Windows
\citep{ratner2023pcw} reuse positions across windows of a long context, and PINE \citep{wang2025pine} makes attention
between segments bidirectional and position-free to remove position bias without training. For candidates, UniMC
\citep{yang2022unimc} gives each option of a multiple-choice encoder its own position ids and an attention mask that
keeps options apart, Set-LLM \citep{egressy2025setllm} guarantees permutation invariance of a decoder over a set through
a set mask and set positions, and Laya offers the same for the options of one question \citep{convai2026laya}. The
option-level mechanism is therefore not new. What \NDM-Block adds is its use at every level of a request, as the request's
semantics, in training and inference: questions isolated from each other as options are, a state that attends only to
itself and can be cached exactly, and exact slicing of options and paths, each a tested property of a trained model.

\paragraph{Sets.} Permutation-invariant and equivariant functions over sets \citep{zaheer2017deepsets} inform the set
head, which compares the options of one question without positional information. Set-Encoder
\citep{schlatt2025setencoder} lets candidate passages interact inside a re-ranking cross-encoder without depending on
their order; the set head does the same after the encoder, over one vector per option.

\paragraph{Calibration and selective prediction.} Temperature scaling \citep{platt1999probabilistic,guo2017calibration},
proper scoring rules \citep{brier1950verification,epstein1969scoring,gneiting2007strictly} and selective classification
\citep{geifman2017selective} are standard; we combine them per question type and add the safeguards that a decision
component needs (bounded temperatures, abstention priors, domain temperature as a secondary report). Strictly proper
scoring rules have also been used as training objectives for language generation \citep{shao2024proper}; here they are
the whole objective, because the output is a distribution over a closed set.

\paragraph{Encoders.} The backbone builds on bidirectional encoders \citep{devlin2019bert} and their modern long-context,
multilingual successors \citep{warner2024modernbert,marone2025mmbert}, with rotary positions \citep{su2021roformer}, which
make logical positions inexpensive.

\paragraph{System 1 and System 2.} The distinction between fast, automatic and slow, deliberate processing
\citep{kahneman2011thinking} has motivated proposals for deliberate, attention-based reasoning in deep learning
\citep{bengio2017consciousness}, and gave System One models their name \citep{typesafe2026jev}. \NDM{}s sit at the fast
end: they are meant to be the fast component, with the
calibration and abstention that let a slower component, or a person, be called only when needed.

\section{Limitations and open questions}
\label{sec:limitations}

\begin{itemize}[leftmargin=*,itemsep=2pt]
  \item \textbf{Synthetic-to-real gap.} Most training data is synthetic. The guards of Section~\ref{sec:training} remove
        known shortcuts but cannot certify realism; the regime-E and regime-Z measurements on real held-out data are the
        only answer, and they are deliberately few.
  \item \textbf{Rubrics.} Composed decisions are only as good as the rubric and its ingredients. A rubric that does not
        match the institution's practice will be applied faithfully and wrongly.
  \item \textbf{Depth.} Accuracy per decision compounds along a path. A model that is right on most individual decisions
        can still be wrong on the final unit; final-unit accuracy, not per-decision accuracy, is the honest metric.
  \item \textbf{Calibration under shift.} Temperatures fitted on synthetic data may not transfer; the domain temperature
        is reported for this reason, and separable calibration sets are flagged rather than trusted.
  \item \textbf{Content of the state.} The structural invariances cover the order of the options and the presence of
        other questions, not the content of the state: short, natural additions to the context can flip the answers
        of System One models \citep{xu2026jevout}, and nothing in the architecture prevents it. Robustness to such
        additions is untested.
  \item \textbf{Floating-point equivalence.} The invariances are exact in exact arithmetic; in practice they hold up to a
        tolerance that depends on batch shape and attention implementation, and the tolerance itself must be justified.
  \item \textbf{Wording of abstention.} The abstention option has a fixed description in each data source. The
        undeclared regime tests recognition by description, but not robustness to paraphrases of ``none of these'';
        paraphrase augmentation of that option is future work.
  \item \textbf{Threshold guarantees.} The selective thresholds are empirical. Risk-controlling alternatives with
        finite-sample guarantees need exchangeable calibration data, which only real data can supply.
  \item \textbf{Narrow by design.} An \NDM{} serves the decisions it was trained for. A new kind of decision needs its
        own seed material and training; until then, or when the decisions are too varied to justify a pipeline, a
        general-purpose System One model is the better tool. Where the line falls is an empirical question.
  \item \textbf{Kinds of state.} Nothing in the architecture is specific to documents, but it has only been exercised on
        text and on records that carry text: public intent sentences, synthetic court communications with their
        metadata and, in Section~\ref{sec:real}, real legal documents. States of other kinds (transactions, event logs, the context of an agent) need their own training data
        and evaluation.
  \item \textbf{Scope.} The design targets narrow, repetitive decisions with written taxonomies. It is not a reasoning
        system, and the decisions it makes in a legal context are recommendations to people, not substitutes for them.
\end{itemize}

\section{Conclusion}

A decision component should answer typed questions about a state with calibrated probabilities, in one pass, without being
swayed by the order of its options or by the other questions it is asked, and it should be able to say that none of the
options applies. We have specified an architecture in which the most fragile of these properties are structural, a way of
deciding over hierarchies and of composing decisions defined by rubrics, a training design that reaches the domain only
after the decision format and the skill of reading descriptions, and a protocol that accepts or rejects the mechanism
before the held-out data is opened. The minimal validation shows that the mechanism holds in a trained model and that the
design is feasible and efficient at small scale, and a first instance trained on real legal documents is promising;
whether it delivers on real decisions in production is the subject of the companion paper.

\section*{Use of generative AI}

The authors used Claude (Anthropic), a large language model, to help draft and edit the text, implement and test the
code, and produce the figures. The seed material for the synthetic data was also drafted with its help, as stated in
Section~\ref{sec:validation}. The authors reviewed the content and take full responsibility for it.

\bibliographystyle{plainnat}
\bibliography{refs}

\appendix

\section{Building the block mask and the logical positions}
\label{app:montagem}

The input is a list of tokenized requests: a state $S$ and, for each question $q$, an instruction $I_q$ and options
$O_{q1}, \dots, O_{qK_q}$ (levels for Score; none for Noul, whose criteria sit inside $I_q$). Length buckets are
$\mathcal{L} = \{256, 512, 768, 1024\}$ tokens.

\begin{enumerate}[leftmargin=*,itemsep=2pt]
  \item \textbf{Blocks.} For each request, open a sequence and emit the state block, with parent $-1$ and logical
        positions $0, \dots, |S|-1$.
  \item \textbf{Questions.} For each question, if the instruction and its first option do not fit in the space left
        under $\max \mathcal{L}$, open a new sequence and copy the state block into it. Emit the instruction block, with
        the state block as parent and positions $|S|, \dots, |S| + |I_q| - 1$.
  \item \textbf{Options.} For each option, if it does not fit, open a new sequence and copy the state and the instruction
        blocks, with the same positions (a slice). Emit the option block, with the instruction block as parent and
        positions $|S| + |I_q|, \dots, |S| + |I_q| + |O_{qk}| - 1$. Sibling options therefore share positions.
  \item \textbf{Visibility.} For each sequence, $\mathrm{ve}[i, j] = 1$ if block $j$ is block $i$ or one of its
        ancestors, by walking up the parent chain from each block.
  \item \textbf{Mask.} $M_{tu} = 0$ if $\mathrm{ve}[\mathrm{block}(t), \mathrm{block}(u)]$ and $-\infty$ otherwise;
        padding belongs to no block and sees only itself. Local layers also require $|\mathrm{pos}(t) - \mathrm{pos}(u)|
        \leq w$, on logical positions.
  \item \textbf{Batches.} Pad each sequence to its length bucket and group sequences of the same bucket into micro-batches
        of about 4,096 padded tokens. Each question is one row of the readout heads, with (sequence, position) pointers to
        its markers; the pointers of a sliced question cross sequences of the same batch.
\end{enumerate}

\paragraph{Cost.} Building $\mathrm{ve}$ takes $O(N_B \cdot d)$ per sequence, for $N_B$ blocks and tree depth $d \leq 3$
inside a request; the token mask takes $O(L^2)$ per sequence, the same order as dense attention itself. Each extra slice
re-encodes $|S| + |I_q|$ tokens. Logical positions are integers and enter the rotary embeddings directly, so they cost
nothing.

\clearpage
\section{Sizes, weights and volumes}
\label{app:hiper}

\begin{table}[h!]
  \centering
  \small
  \begin{tabular}{@{}p{0.27\linewidth}p{0.68\linewidth}@{}}
    \toprule
    Component & Value \\
    \midrule
    Backbone & mmBERT-base: 22 layers (8 global, 14 local, window $\pm 64$ logical positions), $d = 768$; frozen
               embeddings (196.6\,M), trained body (110.3\,M) \\
    Markers & 9 used and 7 reserved vocabulary entries; offsets $\Delta \in \mathbb{R}^{17 \times 768}$, initialized to the
              mean embedding of an equivalent phrase \\
    Question and option vectors & $W_q: \mathbb{R}^{1536} \to \mathbb{R}^{384}$ on $[h_S; h_Q]$ plus a type embedding;
              $W_o: \mathbb{R}^{768} \to \mathbb{R}^{384}$ \\
    Set head & 2 pre-norm layers, width 384, 6 heads, feed-forward 1,536; output projections initialized at zero \\
    Relative scorer & $\mathrm{LN}[o; q; o \odot q]$ (1,152) $\to$ linear (384) $\to$ GELU $\to$ one linear reading per type \\
    Adequacy & $\mathrm{LN}[h_O; h_Q; h_O \odot h_Q]$ (2,304) $\to$ linear (384) $\to$ GELU $\to$ linear \\
    Noul & $\mathrm{LN}[h_Q; h_S]$ (1,536) $\to$ linear (384) $\to$ GELU $\to$ linear \\
    New parameters & 6.38\,M \\
    Losses & Choice: log + Brier (weight 1; accepted set merged); Score: log + RPS (1); Noul: log + Brier (1); adequacy:
             binary log-loss (0.2) \\
    Optimizer & AdamW, $\beta = (0.9, 0.98)$, $\epsilon = 10^{-6}$, weight decay 0.01 (none on norms, biases, $\Delta$ and
                type embeddings), gradient clipping 1.0, cosine schedule with floor 0.1, layer-wise decay 0.95; learning
                rate $2 \times 10^{-5}$ for the body and $2 \times 10^{-4}$ for the new parts ($5 \times 10^{-4}$ in stage 0) \\
    Length limits & state 640 tokens in training and 1,728 in inference; instruction 192; option 128; branch 1,024 and
                    2,048; buckets of 256 to 1,024 tokens; about 4,096 padded tokens per micro-batch \\
    Curriculum & micro-batches: stage 0, 600; stage 1, 6,000 (about 48,000 public and procedural sequences); stage 2a,
                 2,400 (about 2,700 communications); stage 2b, 2,400 (about 2,600) \\
    Synthetic engine & seed split in 4 partitions; fictitious charts of 12 to 30 nodes; 10\% probes with known posterior;
                       30\% opaque option names; abstention declared for about half of the questions (label-independent
                       hash); similarity to real descriptions capped at 0.5 \\
    Calibration & $T \in [0.25, 10]$ per primitive; domain tolerance $\tau = \ln 1.5$; risk target 5\% for $\tau$ and
                  $\tau_p$ \\
    \bottomrule
  \end{tabular}
  \caption{Sizes, weights and volumes of the reference configuration.}
  \label{tab:hiper}
\end{table}

\end{document}